\newtheorem{definition}{Definition}
\newtheorem{theorem}{Theorem}
\newtheorem{lemma}{Lemma}
\newtheorem{claim}{Claim}
\newenvironment{proof}{\noindent{\sf Proof.}}{\hfill $\Box\hspace{2mm}$\linebreak}
\renewcommand{\phi}{\varphi}
\renewcommand{\epsilon}{\varepsilon}
\newenvironment{proof-of-claim}{\noindent{\sc Proof of Claim.}}{\hfill $\Box\hspace{2mm}$\linebreak}
\mathchardef\mhyphen="2D 
\renewcommand{\phi}{\varphi}
\renewcommand{\epsilon}{\varepsilon}
\newcommand{\qed}{\hfill $\Box\hspace{2mm}$}
\newcommand{\C}{{\sf C}}
\newcommand{\K}{{\sf K}}
\newsavebox{\diamonddotsavebox}
\sbox{\diamonddotsavebox}{$\Diamond$\hspace{-1.8mm}\raisebox{0.3mm}{$\cdot$}\hspace{1mm}}
\author{}
\title{Comprehension and Knowledge}
\author {
    Pavel Naumov,\textsuperscript{\rm 1}
    Kevin Ros\textsuperscript{\rm 2}\\
}
\begin{document}

\maketitle

\begin{abstract}
The ability of an agent to comprehend a sentence is tightly connected to the agent's prior experiences and background knowledge. The paper suggests to interpret comprehension as a modality and proposes a complete bimodal logical system that describes an interplay between comprehension and knowledge modalities. 
\end{abstract}

\section{Introduction}

Natural language understanding is a well-developed area of Artificial Intelligence concerned with machine comprehension of human writing and speech. It has applications in machine translation, intelligent virtual assistant design, news-gathering, voice-activation, and sentiment analysis. Most of the current approaches to natural language understanding are based on machine learning techniques. In this paper we propose a logic-based framework for defining and reasoning about comprehension.

Comprehension often requires elimination of the ambiguity present in natural language. This often can be done by taking into account the background knowledge. As an example, consider the following dialog that took place on January 25, 1990 near John F. Kennedy International Airport in New York:
\begin{dialogue}
\speak{Air Traffic Controller} Avianca 052 heavy I'm gonna bring you about fifteen miles north east and then turn you back onto the approach is that fine with you and your fuel
\speak{First Officer} I guess so thank you very much
\end{dialogue}
About 8 minutes after this conversation, Avianca flight 052 ran out of fuel and crashed. Out of 158 persons aboard, 73 died~\cite[p.v]{90ntsb}.
In its report, National Transportation Safety Board lists ``the lack of standardized understandable terminology'' as a contributing factor to the crash~\cite[p.v]{90ntsb}. While analysing the crash, Helmreich points out that Colombia and the United States score very differently on such cultural dimensions as power distance, individualism-collectivism, and uncertainty avoidance. He argues that these cultural factors contributed to the lack of understanding between the Colombian crew and the American air traffic controller~\cite{h94ijap}; others agree~\cite{ofud97ccwg}.


In a low power distance culture ``I guess so'' is an informal way to confirm that the aircraft has enough fuel while, perhaps, communicating the crew's unhappiness to make another loop in the air. In a high power distance culture, such as Colombia, it would be too disrespectful to express the same idea with ``I guess so''. Instead, in such cultures, ``I guess so'' is a mitigated expression of a concern, a respectful way to warn about an imminent danger. The United States, where this sentence could be interpreted either way\footnote{When American air traffic controllers were asked by the investigators what words they would respond immediately when a flight crew communicates a low fuel emergency, they replied ``MAYDAY'', ``PAN, PAN, PAN'', and ``Emergency''~\cite[p.63]{90ntsb}.  Avianca 052 communication transcripts show that the word ``Emergency'' was used in the communication between the pilot and the first officer, but not with the air traffic controller~\cite[p.10]{90ntsb}.}, falls in the middle of power distance scale~\cite[p.87]{hofstede01}.



Note that this ambiguity disappears if the controller has additional knowledge about the cultural background of the crew. As the example shows, knowledge might play a key role in comprehension. In this paper we propose a logic that describes the interplay between knowledge and comprehension.




The rest of this paper in structured as follows. First, we define  a model of our logical system and relate this model to the above example. Then, we define the syntax and the formal semantics of our system, give one more example, and review the related literature. Next, we show that the two modalities of our logical system, knowledge and comprehension,  can not be expressed through each other and list the axioms of our logical system. In the two sections that follow, we prove soundness and sketch the proof of completeness of our system. The full proof of completeness as well as a discussion of how our definition of comprehension can be adapted to settings where meanings and states have probabilities are in the
full version of this paper~\cite{nr20arxiv}.

\section{Epistemic Model with Meanings}

We define knowledge and comprehension in the context of a given epistemic model with meanings.

\begin{definition}\label{model}
An epistemic model with meanings is any tuple 
$$(W,\{\sim_a\}_{a\in\mathcal{A}},\{M_w\}_{w\in W},\{\pi_w\}_{w\in W})$$
such that
\begin{enumerate}
    \item $W$ is an arbitrary set of ``states'',
    \item $\sim_a$ is an ``indistinguishability'' equivalence relation on set $W$ for each agent $a\in\mathcal{A}$,
    \item $M_w$ is a set of ``meanings'' for each state $w\in W$,
    \item $\pi_w$ is a function from propositional variables into subsets of $M_w$ for each state $w\in W$. 
\end{enumerate}
\end{definition}

As we discussed in the introduction, locution ``I guess so'' is a real-world example of the kind of ambiguity that an artificial agent should be able to reason about  in order to comprehend human verbal communication. In this section we interpret it as a statement that words ``I guess so'' give an accurate description of the current state. We denote this statement by propositional variable $p$. 

\begin{figure}[ht]
\begin{center}
\scalebox{0.4}{\includegraphics{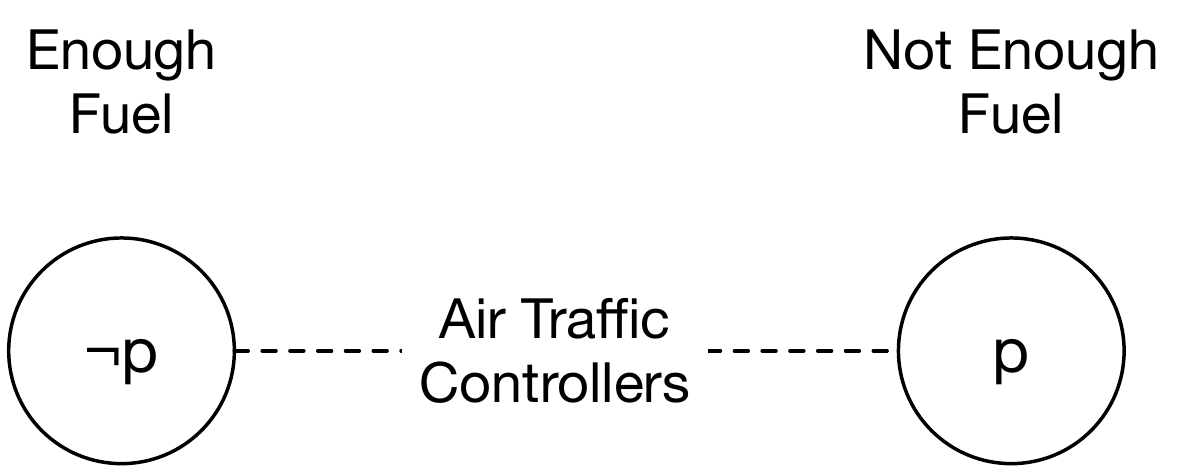}}
\caption{Landing in Bogot\'a, Columbia.}\label{intro bogota figure}
\end{center}
\end{figure}
Figure~\ref{intro bogota figure} depicts an epistemic model capturing a hypothetical landing of Avianca 052 in Bogot\'a, Columbia, where the flight originated. This model has two states, ``Enough Fuel'' and ``Not Enough Fuel'', indistinguishable (before the pilots say ``I guess so'') to the air traffic controllers. Since the traffic controllers at Bogot\'a airport have the same high power distance cultural background as Avianca's pilots, to them statement $p$ is true in state ``Not Enough Fuel'' and false in state ``Enough Fuel''. Once the Bogot\'a controllers hear ``I guess so'', they likely will conclude that the plane is low on fuel and issue an emergency landing order.

\begin{figure}[ht]
\begin{center}
\scalebox{0.4}{\includegraphics{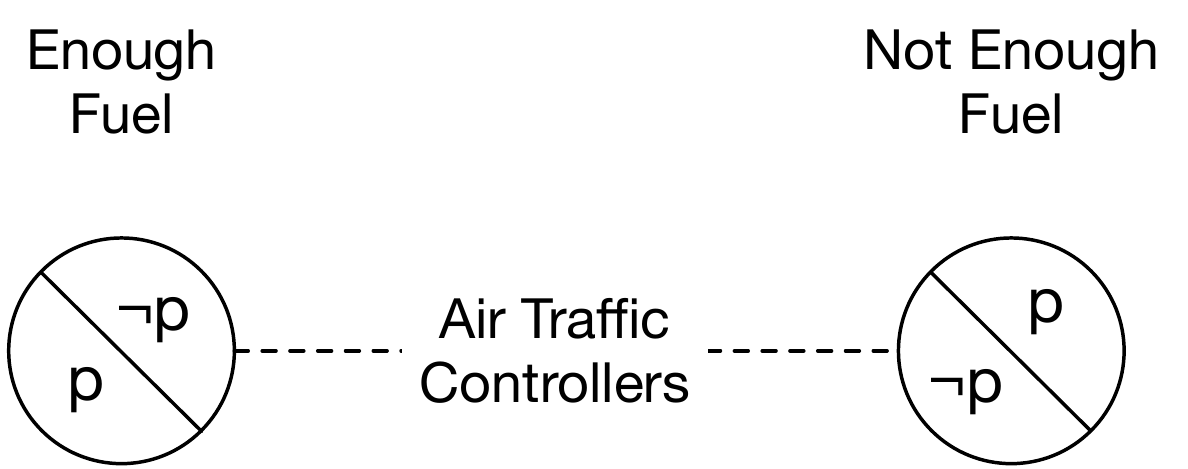}}
\caption{Landing in New York, USA.}\label{intro nyc figure}
\end{center}
\end{figure}

Figure~\ref{intro nyc figure} depicts an epistemic model describing the actual landing of Avianca 052 at JFK International Airport in New York. It also has two states indistinguishable to the air traffic controllers. We capture the ambiguity of the locution ``I guess so'' to New York controllers by saying that it has two distinct meanings: a low-power-distance culture meaning ``I am ok on fuel, but I am unhappy about another loop in the air'' and high-power-distance culture meaning of a mitigated expression of a concern. Both of these meanings exist in either of the two states. In general, we use meanings to capture ambiguity of a natural language. Just as words and phrases could be interpreted differently in various contexts, the same propositional variable could be true in a given state under one meaning and false under another. We allow for different sets of meanings in different states. We visually represent the meanings by dividing the circle of the state into two (or more) areas corresponding to the different meanings. In the diagram, the upper-right and lower-left semicircles represents the high and the low power distance meanings respectively.

\section{Syntax and Semantics}

In this section we describe the syntax and the formal semantics of our logical system. We assume a fixed countable set of propositional variables and a fixed countable set of agents $\mathcal{A}$. The language $\Phi$ of our system is defined by the grammar
$$
\phi := p\;|\;\neg\phi\;|\;\phi\to\phi\;|\;\K_a\phi\;|\;\C_a\phi,
$$
where $p$ is a propositional variable and $a\in\mathcal{A}$ is an agent. We read $\K_a\phi$ as ``agent $a$ knows $\phi$'' and $\C_a\phi$ as ``agent $a$ comprehends $\phi$''. We assume that conjunction $\wedge$, biconditional $\leftrightarrow$, and true $\top$ are defined through negation $\neg$ and implication $\to$ in the usual way. For any finite set of formulae $Y\subseteq\Phi$, by $\wedge Y$ we mean the conjunction of all formulae in set $Y$. By definition, $\wedge\varnothing$ is formula $\top$.

\begin{definition}\label{sat}
For any formula $\phi\in\Phi$, any state $w\in W$, and any meaning $m\in M_w$, satisfaction relation $(w,m)\Vdash\phi$ is defined recursively as follows:
\begin{enumerate}
    \item $(w,m)\Vdash p$ if $m\in\pi_w(p)$,
    \item $(w,m)\Vdash\neg\phi$ if $(w,m)\nVdash\phi$,
    \item $(w,m)\Vdash\phi\to\psi$ if $(w,m)\nVdash\phi$ or $(w,m)\Vdash\psi$,
    \item $(w,m)\Vdash\K_a\phi$ if $(u,m')\Vdash\phi$ for each state $u\in W$ such that $w\sim_a u$ and each meaning $m'\in M_u$,
    \item $(w,m)\Vdash\C_a\phi$ when for each state $u\in W$ and any  meanings $m',m''\in M_u$, if $w\sim_a u$ and $(u,m')\Vdash\phi$, then $(u,m'')\Vdash\phi$. 
\end{enumerate}
\end{definition}
Note that one can potentially consider the following alternative to item 4 of the above definition:

{\em
\begin{enumerate}
    \item[4$'$.] $(w,m)\Vdash\K_a\phi$ if $(u,m)\Vdash\phi$ for each state $u\in W$ such that $w\sim_a u$.
\end{enumerate}
}
Under this definition, statement $(w,m)\Vdash\K_a\phi$ would mean that ``agent $a$ knows that $\phi$ is true in state $w$ under meaning $m$''.  An agent might know $\phi$ to be true under one meaning and false under another. If statement $\phi$ is written or said by somebody else, the agent will not know if it is true or false. Thus, we stipulate that in order for an agent to know that $\phi$ is true, she should know that $\phi$ is true under any meaning. This is captured in item 4 of Definition~\ref{sat}.  

Item 5 of Definition~\ref{sat} is the key definition of this paper. It formally specifies the semantics of the comprehension modality $\C$. As defined in item 4, statement ``an agent $a$ knows $\phi$'' means that $\phi$ is {\em true} under each meaning in each $a$-indistinguishable state. We say that agent $a$ comprehends $\phi$ if $\phi$ is {\em consistent across the meanings} in each $a$-indistinguishable state. In other words, $a$ comprehends $\phi$ if, for each $a$-indistinguishable state, $\phi$ is true under one meaning if and only if it is true under any other meaning. 

In our example from Figure~\ref{intro bogota figure}, statement 
\begin{equation}\label{K example equation}
    \K_{\mbox{\scriptsize Traffic Controllers}}\;p
\end{equation}
is false in both states because $p$ is true under the unique meaning in the right state and is false under the unique meaning in the left state. At the same time, statement
\begin{equation}\label{C example equation}
    \C_{\mbox{\scriptsize Traffic Controllers}}\;p
\end{equation}
is true in both states because in both states the value of the propositional variable $p$ is vacuously consistent across all meanings in the state. In other words, in the example from Figure~\ref{intro bogota figure}, the air traffic controllers do not know (before the pilots say ``I guess so'') if statement $p$ is true or not, but they comprehend this statement due to the lack of multiple meanings.

In the example depicted in Figure~\ref{intro nyc figure}, statement~(\ref{K example equation}) is still false in both indistinguishable states because the diagram has at least one meaning in one of the states where propositional variable $p$ is false. In addition, statement~(\ref{C example equation}) is also false in both states of this example because the value of the propositional variable $p$ is not consistent across the meanings in at least one (in our case, both) of the two indistinguishable states.

To summarize, before the pilots say ``I guess so'', in both examples the air traffic controllers do not know if statement $p$ is true or not. However, the controllers in Bogot\'a comprehend $p$ and the controllers in New York do not.

The next lemma holds because, by item 5 of Definition~\ref{sat}, validity of  $(w,m)\Vdash\C_a\phi$ does not depend on the value $m$.
\begin{lemma}\label{C transfer lemma}
$(w,m)\Vdash\C_a\phi$ iff $(w,m')\Vdash\C_a\phi$ for any state $w\in W$ and any meanings $m,m'\in M_w$. \qed
\end{lemma}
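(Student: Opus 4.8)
The plan is to observe that the lemma is an immediate consequence of the exact wording of item~5 in Definition~\ref{sat}, so the whole proof reduces to unfolding that clause for the two meanings $m$ and $m'$ and checking that the resulting conditions are literally the same assertion.

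First I would write out the condition $(w,m)\Vdash\C_a\phi$ in full: it says that for every state $u\in W$ with $w\sim_a u$ and all meanings $m_1,m_2\in M_u$, if $(u,m_1)\Vdash\phi$ then $(u,m_2)\Vdash\phi$. The crucial observation is that the meaning $m$ appearing on the left never reoccurs on the right-hand side of this condition. The indistinguishability relation $w\sim_a u$ refers only to the state $w$, the universal quantifier ranges over states $u$ with $w\sim_a u$, and the two meanings $m_1,m_2$ are drawn from $M_u$, the meaning set of the target state $u$. Nothing in the clause constrains, mentions, or depends on $m$.

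Next I would write out the analogous condition for $(w,m')\Vdash\C_a\phi$ and note that, because only $w$ (and $\phi$) enters the defining clause, the two unfolded statements are syntactically identical. Hence $(w,m)\Vdash\C_a\phi$ holds precisely when $(w,m')\Vdash\C_a\phi$ holds, which is exactly the claimed biconditional. Equivalently, one can phrase the argument as: the predicate defining $\C_a\phi$-satisfaction is a function of $w$ and $\phi$ alone, so it is constant in its meaning argument across $M_w$.

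There is essentially no obstacle here; the statement is recorded without a separate argument for this reason. The only point requiring genuine care is verifying that the quantification over meanings in item~5 is over meanings of the target state $u$, not of the source state $w$, since it is precisely this feature that makes $m$ vacuous in the definition. Once that is confirmed, the equivalence is immediate.
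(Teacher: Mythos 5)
Your proof is correct and is exactly the paper's own argument: the paper notes, just before stating the lemma, that by item~5 of Definition~\ref{sat} the validity of $(w,m)\Vdash\C_a\phi$ does not depend on $m$, which is precisely your observation that the defining clause mentions only $w$, $\phi$, and the quantified state $u$ with its meanings. Nothing further is needed.
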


\section{Guard Ava}

As another example, consider a hypothetical company whose office building has three entrances: $X$, $Y$, and $Z$.  Before the building opens, a robotic guard Ava is given the instruction ``{\em All visitors must enter the building through door X or through door Y and wear a badge}''. The door $X$ in the building is often broken and closed for repair, but Ava has access to the door status information. She knows that today the door is open.

To keep the formal model simple, let us assume that only one visitor will arrive today. Thus, a state of the model could be completely described by specifying (i) whether door $X$ is closed or open, (ii) whether this door is used by the visitor, and (iii) whether the visitor has a badge. Since the visitor cannot enter the building through a closed door, there are 10 different states in our model. These states are depicted in Figure~\ref{door figure}. In 4 of these states denoted by {\em double} circles, door $X$ is closed. In the remaining 6 states the door is open. The door used by the visitor and the badge status are represented by the row and column in which the state is located. For example, in state $w$, visitor enters through door $Y$ and wears a badge.

\begin{figure}[ht]
\begin{center}
\scalebox{0.4}{\includegraphics{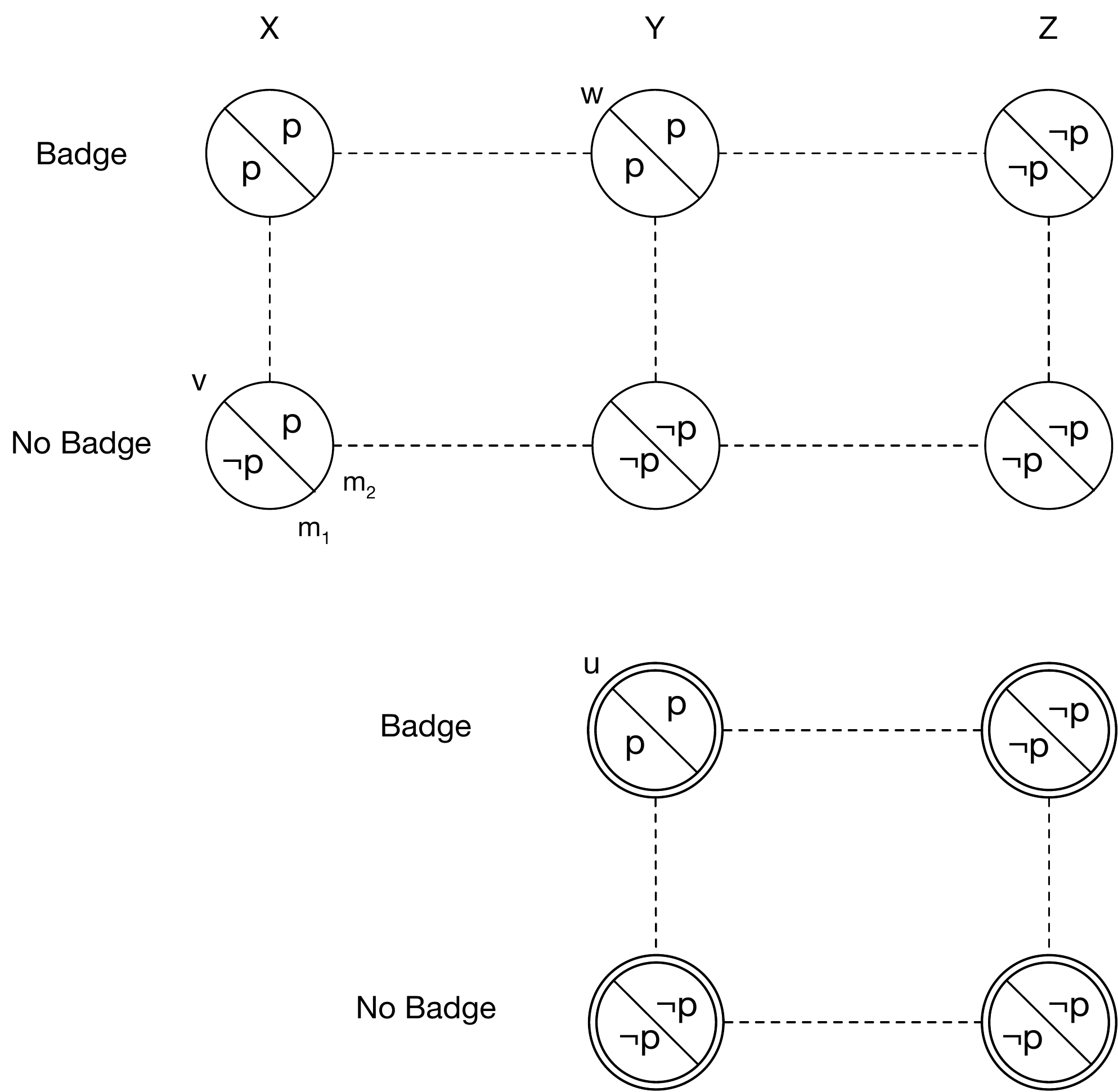}}
\caption{Door $X$ is closed in double-circled states.}\label{door figure}
\end{center}
\end{figure}

In this example we consider Ava's knowledge and comprehension of the above instruction {\em before} the visitor enters the building. Thus, she can distinguish the states with closed door $X$ from the states where door $X$ is open, but she does not know through which door the visitor will enter or whether he will wear a badge. This indistinguishability relation is depicted in the diagram by dashed lines between the states. 

Suppose that $x$ is proposition ``the visitor enters through door $X$'', $y$ is proposition ``the visitor enters through door $Y$'', and $b$ is proposition ``the visitor wears a badge''. Then, statement ``the visitor enters the building through door X or through door Y and wears a badge'' could be interpreted as either $(x\vee y)\wedge b$ or $x\vee (y\wedge b)$. These two interpretations are captured in our model as two different meanings. The meaning $m_1$ (low-left half-circle of each state) corresponds to the interpretation $(x\vee y)\wedge b$ and meaning $m_2$ (upper-right half-circle of each state) corresponds to the interpretation $x\vee (y\wedge b)$.

In state $v$, see Figure~\ref{door figure}, the  visitor, wearing no badge, enters the building through open door $X$.
Thus, in state $v$, statement  $(x\vee y)\wedge b$ is false, but statement $x\vee (y\wedge b)$ is true. In other words, in this state propositional variable $p$, representing statement ``the visitor enters the building through door X or through door Y and wears a badge'', is false under meaning $m_1$ and true under meaning $m_2$. Using our formal notations, $(v,m_1)\nVdash p$ and $(v,m_2)\Vdash p$. Hence, in state $v$ propositional variable $p$ is not consistent across the meanings. It is easy to see that propositional variable $p$ is consistent across the meanings in all other states of our model, see Figure~\ref{door figure}.

By Definition~\ref{sat}, in order for Ava to comprehend statement $p$ in, say, state $w$, this statement must be consistent across the meanings in all states indistinguishable from state $w$. Since it is not consistent across the meanings in state $v$, Ava does not comprehend statement $p$ in state $w$. Formally, $(w,m)\nVdash\C_{\mbox{\scriptsize Ava}}\; p$ for each meaning $m$ in state $w$. 

Finally, consider state $u$ in the same diagram. Here, just like in state $w$, the visitor enters through door $Y$ and wears a badge, however, this time door $X$ is closed and Ava knows about this. Note that statement $p$ is consistent across the meanings in all states indistinguishable from state $u$. Thus, $(u,m)\Vdash\C_{\mbox{\scriptsize Ava}}\; p$ for each meaning $m$ in state $u$. In other words, on the days when door $X$ is open Ava cannot comprehend sentence $p$, but she can comprehend it on the days when the door is closed. Having the additional knowledge that the visitor cannot enter through door $X$ improves her comprehension.

\section{Literature Review}

Langer states that ``the knowledge and experience an individual brings to a reading task are critical factors in comprehension'' (\citeyear{l84rrq}). The connection between knowledge and comprehension has long been a subject of psychology  and literacy studies~\cite{phg79jrb,kbbb00ps,hhbp04science,kae17orec}.

Within the field of psychology, the comprehension of logical connectives is investigated in~\cite{p73jecp}. D'Hanis suggests to use adaptive logic for capturing metaphors~(\citeyear{d02lcamr}). Another logical system for metaphors in Chinese language is advocated in~\cite{zz04jcip}.
Neither of the last two papers claim a complete axiomatization.

Comprehension can be viewed as a very special form of ``awareness closed under subformulae'' modality from Logic of General Awareness~\cite{fh87ai}. This connection is not very deep, although, as most of our axioms are not valid in that logic. We are not aware of any works proposing logical systems specifically for comprehension as a modality. Comprehension of a sentence could be thought of as the knowledge of what the sentence means. Thus, it is related to the other forms of knowledge, such as know-whether, know-what, know-how, know-why, know-who, know-where, and know-value \cite{w18hintikka}. 

Logics of {\em know-how} without knowledge modality are proposed in~\cite{w17synthese} and ~\cite{lw17icla}. A logic of know-how for a single agent that also contains a knowledge modality is introduced in~\cite{fhlw17ijcai}. Coalition logic of know-how with individual knowledge modality is axiomatized in~\cite{aa16jlc}. Several versions of coalition know-how logics with distributed knowledge modality are described in~\cite{nt17aamas,nt18ai,nt18aamas,nt18aaai,cn20ai}.

Logics of {\em know-whether} are studied in~\cite{fwv15rsl,fgksv20arxiv}.
Different forms of {\em know-value} logics are investigated in~\cite{wf13ijcai,gw16aiml,vgw17icla}. 
Logic of {\em know-why} is proposed in~\cite{xws19synthese}.

\section{Undefinability of Comprehension through Knowledge}

In this section we prove that the comprehension modality $\C$ is not definable through knowledge modality $\K$. More precisely, we show that modality $C$ can not be expressed in the language $\Phi^{\mhyphen\C}$ defined by the grammar
$$
\phi := p\;|\;\neg\phi\;|\;\phi\to\phi\;|\;\K_a\phi.
$$
We prove this by constructing two models indistinguishable in language $\Phi^{\mhyphen\C}$, but distinguishable in the full language $\Phi$ of our logical system.  Without loss of generality, we can assume that the set of agents $\mathcal{A}$ consists of a single agent $a$ and the set of propositional variables contains a single propositional variable $p$. The two models that we use to prove undefinability are depicted in Figure~\ref{C-undefinability figure}.
\begin{figure}[ht]
\begin{center}
\scalebox{0.4}{\includegraphics{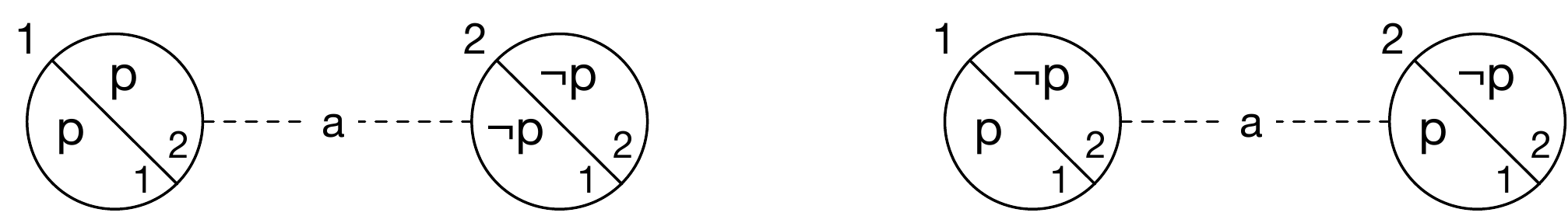}}
\caption{Two Models.}\label{C-undefinability figure}
\end{center}
\end{figure}
We refer to them as the left and the right models. Both models have two states: 1 and 2 indistinguishable to agent $a$. Each state has two meanings: 1 and 2. In the diagram, the number {\em outside} of a circle is the name of the state, while the number {\em inside} of a semi-circle is the name of the meaning. It will be important for our proof that states and meanings have the same names. Valuation functions $\pi^l$ of the left model and $\pi^r$ of the right model are specified in Figure~\ref{C-undefinability figure}. For example, $\pi^l_1(p)=\{1,2\}$. In other words, in state $1$ of the left model, propositional variable $p$ is true under meaning 1 and meaning 2.  By $\Vdash_l$ and $\Vdash_r$ we denote the satisfaction relation of the left and the right model respectively. The next lemma proves that the two models are indistinguishable in language $\Phi^{\mhyphen \C}$. Note that the order of $x$ and $y$ is different on the left-hand-side of the two satisfaction statements in this lemma. 
\begin{lemma}
$(x,y)\Vdash_l\phi$ iff $(y,x)\Vdash_r\phi$ for any integers $x,y\in\{1,2\}$ and any formula  $\phi\in\Phi^{\mhyphen \C}$.
\end{lemma}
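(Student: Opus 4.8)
The plan is to prove the statement by structural induction on the formula $\phi\in\Phi^{\mhyphen\C}$, following the four clauses of that grammar: a propositional variable $p$, negation, implication, and the knowledge modality $\K_a$. The Boolean connectives will follow mechanically from the induction hypothesis, so the real content lies in the base case and in the $\K_a$ case.

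For the base case $\phi=p$, by item 1 of Definition~\ref{sat} the statement $(x,y)\Vdash_l p$ holds exactly when $y\in\pi^l_x(p)$, while $(y,x)\Vdash_r p$ holds exactly when $x\in\pi^r_y(p)$. Thus it suffices to check, for each of the four pairs $(x,y)\in\{1,2\}^2$, the equivalence ``$y\in\pi^l_x(p)$ iff $x\in\pi^r_y(p)$''. This is a finite verification against the valuation functions $\pi^l$ and $\pi^r$ displayed in Figure~\ref{C-undefinability figure}; concretely, one confirms that the right model's valuation is the ``transpose'' of the left model's, which is precisely why the state and meaning coordinates are interchanged on the two sides of the lemma.

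For the induction step, negation and implication are immediate: for instance $(x,y)\Vdash_l\neg\psi$ iff $(x,y)\nVdash_l\psi$ iff (by the induction hypothesis) $(y,x)\nVdash_r\psi$ iff $(y,x)\Vdash_r\neg\psi$, and implication is handled the same way using the hypothesis on both subformulae. The interesting case is $\phi=\K_a\psi$. Here I would first observe that, since states $1$ and $2$ are mutually $a$-indistinguishable in both models, item 4 of Definition~\ref{sat} gives that $(x,y)\Vdash_l\K_a\psi$ holds if and only if $(u,m')\Vdash_l\psi$ for \emph{every} pair $(u,m')\in\{1,2\}^2$; in particular this truth value does not depend on the starting pair $(x,y)$. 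Applying the induction hypothesis, it is equivalent to $(m',u)\Vdash_r\psi$ for every $(u,m')\in\{1,2\}^2$. Since the map $(u,m')\mapsto(m',u)$ is a bijection of $\{1,2\}^2$, the latter is the same as requiring $(u',m'')\Vdash_r\psi$ for every $(u',m'')\in\{1,2\}^2$, which by the same observation is exactly the condition for $(y,x)\Vdash_r\K_a\psi$.

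I expect the main obstacle, really the one place where a genuine idea is needed rather than routine bookkeeping, to be the $\K_a$ case: one must notice that total indistinguishability of the two states collapses the modal quantification into a quantification over \emph{all} state-meaning pairs, and that the coordinate swap supplied by the induction hypothesis is harmless precisely because it is a bijection of that pair set. The base case is the only step that uses the specific numbers in Figure~\ref{C-undefinability figure}, and everything else propagates formally.
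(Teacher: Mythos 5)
Your proof is correct and follows essentially the same route as the paper's: structural induction where the base case reduces to checking that the right model's valuation is the transpose of the left one's, the Boolean cases are routine, and the $\K_a$ case exploits the fact that both states are mutually $a$-indistinguishable, so the modal clause quantifies over all four state--meaning pairs and the coordinate swap from the induction hypothesis is absorbed by a bijection of that set. The paper writes the $\K_a$ case as two symmetric implications rather than a single chain of equivalences, but the underlying argument is identical.
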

\begin{proof}
We prove the statement by induction on structural complexity of formula $\phi$. First, we consider the case when $\phi$ is a propositional variable $p$. Observe that $y\in \pi^l_x(p)$ iff $x\in \pi^l_y(p)$ for any integers $x,y\in\{1,2\}$, see Figure~\ref{C-undefinability figure}. Thus, $(x,y)\Vdash_l p$ iff $(y,x)\Vdash_r p$ by item 1 of Definition~\ref{sat}.

If formula $\phi$ is a negation or an implication, then the required follows from items 2 and 3 of Definition~\ref{sat} and the induction hypothesis in the standard way.

Suppose that formula $\phi$ has the form $\K_a\psi$. By item 4 of Definition~\ref{sat}, statement $(x,y)\Vdash_l\K_a\psi$ implies that $(x',y')\Vdash_l\psi$ for any integers $x',y'\in\{1,2\}$. Hence, by the induction hypothesis, $(y',x')\Vdash_r\psi$ for any integers $x',y'\in\{1,2\}$. Therefore, $(y,x)\Vdash_r\K_a\psi$ again by item 4 of Definition~\ref{sat}. The proof in the other direction is similar. 
\end{proof}

The next lemma shows that the left and the right models are distinguishable in the language $\Phi$ of our logical system.

\begin{lemma}
$(1,1)\Vdash_l\C_a p$ and $(1,1)\nVdash_r\C_a p$.
\end{lemma}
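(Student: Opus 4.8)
The plan is to verify both conjuncts directly from item 5 of Definition~\ref{sat}, reading the four valuations off Figure~\ref{C-undefinability figure}. No auxiliary machinery is needed: Lemma~\ref{C transfer lemma} and the undefinability lemma just proved play no role here, since the claim is a single semantic check carried out separately in each model.

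For the first conjunct, $(1,1)\Vdash_l\C_a p$, I would unfold item 5: I must show that for every state $u$ with $1\sim_a u$ and all meanings $m',m''\in M_u$, $(u,m')\Vdash_l p$ implies $(u,m'')\Vdash_l p$. Since agent $a$ cannot distinguish states $1$ and $2$, and these are the only states, it suffices to check that $p$ is consistent across the meanings in state $1$ and in state $2$ of the left model. In state $1$ we have $\pi^l_1(p)=\{1,2\}$, so $p$ is true under both meanings; in state $2$ the figure gives $\pi^l_2(p)=\varnothing$, so $p$ is false under both meanings. In either state the implication in item 5 holds (trivially in state $1$, vacuously in state $2$), which establishes $(1,1)\Vdash_l\C_a p$.

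For the second conjunct, $(1,1)\nVdash_r\C_a p$, I would negate the condition in item 5: it is enough to exhibit a single state $u$ with $1\sim_a u$ together with meanings $m',m''\in M_u$ for which $(u,m')\Vdash_r p$ but $(u,m'')\nVdash_r p$. I expect state $u=1$ to work, since $1\sim_a 1$. Reading $\pi^r_1(p)=\{1\}$ from the figure, meaning $m'=1$ gives $(1,1)\Vdash_r p$ while meaning $m''=2$ gives $(1,2)\nVdash_r p$, so $p$ is not consistent across the meanings in state $1$ of the right model. This single witness refutes the universal statement defining $\C_a p$, yielding $(1,1)\nVdash_r\C_a p$.

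The proof is a routine verification, so there is no substantial obstacle. The only points requiring care are correctly negating the nested quantifiers in the comprehension clause (a failure of comprehension needs just one indistinguishable state and one pair of disagreeing meanings, not a failure everywhere) and accurately transcribing the valuations $\pi^l_1$, $\pi^l_2$, $\pi^r_1$, $\pi^r_2$ from Figure~\ref{C-undefinability figure}, since these entirely drive the argument.
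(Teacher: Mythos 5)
Your proof is correct and follows essentially the same route as the paper: verify consistency of $p$ across the two meanings in each state of the left model (true under both meanings in state $1$, false under both in state $2$), and refute comprehension in the right model by the witness state $1$ with $(1,1)\Vdash_r p$ but $(1,2)\nVdash_r p$, exactly as the paper does via item 5 of Definition~\ref{sat}. Your reading of the valuations, including $\pi^l_2(p)=\varnothing$ and $\pi^r_1(p)=\{1\}$, matches the model in Figure~\ref{C-undefinability figure}.
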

\begin{proof}
Note that $1\in \pi_x^l(p)$ iff $2\in \pi_x^l(p)$ for any integer $x\in\{1,2\}$, see Figure~\ref{C-undefinability figure}. Thus, $(x,1)\Vdash_l p$ iff $(x,2)\Vdash_l p$ for any integer $x\in\{1,2\}$ by item 1 of Definition~\ref{sat}. Therefore, $(1,1)\Vdash_l\C_a p$ by item 5 of Definition~\ref{sat}.

Next, observe that $1\in \pi_1^r(p)$ and  $2\notin \pi_1^r(p)$, see Figure~\ref{C-undefinability figure}. Thus, $(1,1)\Vdash_r p$ and $(1,2)\nVdash_r p$ by item 1 of Definition~\ref{sat}. Therefore, $(1,1)\nVdash_r\C_a p$ by item 5 of Definition~\ref{sat}.
\end{proof}

The next theorem follows from the two lemmas above.

\begin{theorem}
Comprehension modality $\C$ is not definable in language $\Phi^{\mhyphen \C}$. \qed
\end{theorem}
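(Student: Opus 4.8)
The plan is to argue by contradiction, using the two lemmas above to produce a single $\C$-free formula on which the two models must agree yet are forced to disagree.

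First, I would make the notion of definability explicit. If $\C$ were definable in $\Phi^{\mhyphen\C}$, then in particular the formula $\C_a p$ would be equivalent, across all epistemic models with meanings, to some formula $\psi\in\Phi^{\mhyphen\C}$. Here equivalence means $(w,m)\Vdash\C_a p$ iff $(w,m)\Vdash\psi$ at every state-meaning pair of every model; in particular it must hold in both the left and the right models of Figure~\ref{C-undefinability figure}.

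Next, I would evaluate everything at the single pair $x=y=1$. The second lemma above gives $(1,1)\Vdash_l\C_a p$ and $(1,1)\nVdash_r\C_a p$. Applying the assumed equivalence with $\psi$ converts these into $(1,1)\Vdash_l\psi$ and $(1,1)\nVdash_r\psi$. But since $\psi\in\Phi^{\mhyphen\C}$, the first lemma above, taken with $x=y=1$, yields $(1,1)\Vdash_l\psi$ iff $(1,1)\Vdash_r\psi$. Thus $(1,1)\Vdash_l\psi$ forces $(1,1)\Vdash_r\psi$, contradicting $(1,1)\nVdash_r\psi$.

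This contradiction shows that no such $\psi$ can exist, and therefore $\C$ is not definable in $\Phi^{\mhyphen\C}$. There is no real obstacle once both lemmas are in hand; the only step needing care is phrasing the definability assumption precisely, so that it hands us a $\C$-free equivalent of the specific witness $\C_a p$. This matters because the coordinate-swapping symmetry of the first lemma must be invoked exactly at the diagonal pair $(1,1)$, where the two models' verdicts on $\C_a p$ already diverge by the second lemma.
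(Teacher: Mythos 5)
Your proposal is correct and is exactly the argument the paper intends: the paper simply states that the theorem ``follows from the two lemmas above,'' and your write-up fills in those details the intended way, instantiating a hypothetical $\C$-free definition of $\C_a p$ at the diagonal pair $(1,1)$, where the coordinate swap of the first lemma is invisible, and deriving a contradiction with the second lemma. No gaps; you have merely made explicit what the paper leaves to the reader.
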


\section{Undefinability of Knowledge through Comprehension}

In this section we prove that knowledge modality $\K$ is not definable in the language $\Phi^{\mhyphen\K}$ specified by the grammar
$$
\phi := p\;|\;\neg\phi\;|\;\phi\to\phi\;|\;\C_a\phi.
$$
The proof is similar to the one in the previous section. The left and the right models are depicted in Figure~\ref{K-undefinability figure}. \begin{figure}[ht]
\begin{center}
\scalebox{0.4}{\includegraphics{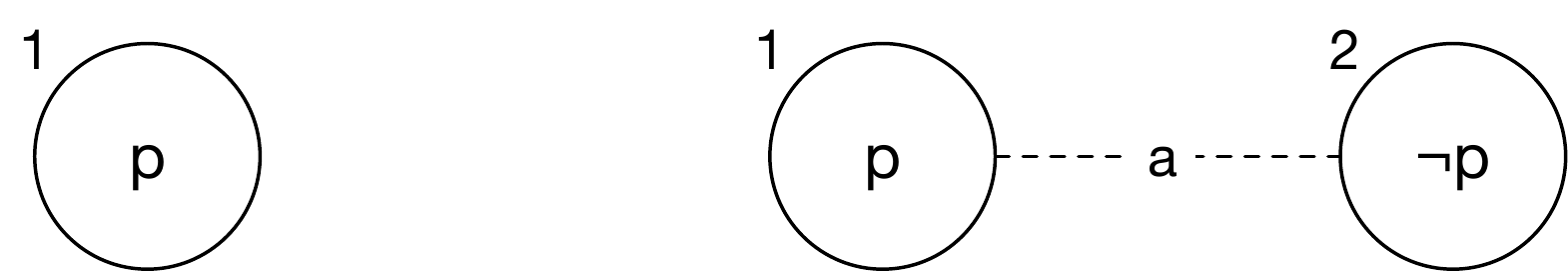}}
\caption{Two Models.}\label{K-undefinability figure}
\end{center}
\end{figure}
The left model has a single state 1, while the right model has two states, 1 and 2, indistinguishable to agent $a$. All states in both models have only one meaning, which we refer to as meaning 1. First, we show that state 1 in the left model is indistinguishable in language $\Phi^{\mhyphen\K}$ from state 1 in the right model. 

\begin{lemma}
$(1,1)\Vdash_l\phi$ iff $(1,1)\Vdash_r\phi$ for any $\phi\in\Phi^{\mhyphen \K}$.
\end{lemma}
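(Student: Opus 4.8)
The plan is to prove the biconditional by induction on the structural complexity of the formula $\phi\in\Phi^{\mhyphen\K}$, matching the four clauses of the grammar: propositional variables, negation, implication, and comprehension $\C_a$. The whole argument is short once one notices the special structure of the two models, so I would keep the routine cases brief and concentrate on the comprehension case.

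For the base case $\phi=p$, I would read off from Figure~\ref{K-undefinability figure} that the valuations agree at the single meaning of state $1$, that is, $1\in\pi_1^l(p)$ iff $1\in\pi_1^r(p)$. By item 1 of Definition~\ref{sat}, this immediately gives $(1,1)\Vdash_l p$ iff $(1,1)\Vdash_r p$. The negation and implication cases follow from items 2 and 3 of Definition~\ref{sat} together with the induction hypothesis in the usual way.

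The only case that uses the specific structure of these two models is $\phi=\C_a\psi$, and this is also where the key observation lies. Because every state in both models carries exactly one meaning, namely meaning $1$, the two meanings $m',m''$ ranging over $M_u$ in item 5 of Definition~\ref{sat} must coincide. Hence the implication ``if $(u,m')\Vdash\psi$ then $(u,m'')\Vdash\psi$'' is trivially satisfied for every indistinguishable state $u$, so $\C_a\psi$ holds at every state of each model. In particular, both $(1,1)\Vdash_l\C_a\psi$ and $(1,1)\Vdash_r\C_a\psi$ are true, and the biconditional holds. Note that in the right model the quantification in item 5 ranges over the indistinguishable state $2$ as well as state $1$; this causes no difficulty precisely because state $2$ also has a single meaning, so comprehension remains vacuously true there.

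I expect the main point to be conceptual rather than computational: recognizing that the single-meaning structure of both models forces the comprehension modality to be uniformly valid, and therefore blind to the difference between the one-state left model and the two-state right model. This is exactly what makes the $\C$-only language $\Phi^{\mhyphen\K}$ unable to separate the models, whereas the knowledge formula $\K_a p$ will separate them in the full language by detecting the extra indistinguishable state of the right model. The induction hypothesis is in fact not even needed in the comprehension case, since both sides reduce to truth independently of $\psi$.
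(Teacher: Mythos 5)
Your proposal is correct and follows essentially the same argument as the paper's proof: induction on structural complexity, with the routine base and Boolean cases, and the key observation that since every state of both models has a single meaning, $\C_a\psi$ is vacuously true everywhere (so the induction hypothesis is not even needed in that case). The paper's proof of the $\C_a\psi$ case is word-for-word this same reasoning, so there is nothing to add.
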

\begin{proof}
We prove the statement of the lemma by induction on structural complexity of formula $\phi$. Note that $1\in \pi^l_1(p)$ and  $1\in \pi^r_1(p)$, see Figure~\ref{K-undefinability figure}. Thus, $(1,1)\Vdash_l p$ and $(1,1)\Vdash_r p$ by item 1 of Definition~\ref{sat}. Therefore, the statement of the lemma holds if formula $\phi$ is propositional variable $p$.

If formula $\phi$ is a negation or an implication, then the required follows from items 2 and 3 of Definition~\ref{sat} and the induction hypothesis in the standard way.

Suppose that formula $\phi$ has the form $\C_a\psi$. Note that $(1,1)\Vdash_l\C_a\psi$ by item 5 of Definition~\ref{sat} because there is only one meaning in the unique state of the left model. Similarly, $(1,1)\Vdash_r\C_a\psi$ because there is only one meaning in each of the two states of the right model. Therefore, statement of the lemma holds in the case when formula $\phi$ has the form $\C_a\psi$.
\end{proof}

The next lemma shows that the left and the right models are distinguishable in the language $\Phi$ of our logical system.

\begin{lemma}
$(1,1)\Vdash_l\K_a p$ and $(1,1)\nVdash_r\K_a p$.
\end{lemma}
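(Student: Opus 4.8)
The plan is to verify each of the two claims directly from the definitions, using the explicit structure of the two models shown in Figure~\ref{K-undefinability figure}. Both models have the single meaning~$1$ in every state, and agent~$a$ is the only agent. The left model consists of a single state~$1$ (so $1\sim_a 1$ and nothing else), while the right model has two states $1$ and $2$ that are $a$-indistinguishable. The only propositional data that matters is the valuation of~$p$ in each state under the unique meaning; from the figure, $1\in\pi^l_1(p)$ in the left model, and in the right model $1\in\pi^r_1(p)$ but $1\notin\pi^r_2(p)$ (i.e., $p$ holds under the unique meaning of state~$1$ but fails under the unique meaning of state~$2$).

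First I would establish the positive claim $(1,1)\Vdash_l\K_a p$. By item~4 of Definition~\ref{sat}, this requires $(u,m')\Vdash_l p$ for every state $u$ with $1\sim_a u$ and every meaning $m'\in M_u$. In the left model the only such state is~$1$ and its only meaning is~$1$, so the condition reduces to checking $(1,1)\Vdash_l p$, which holds because $1\in\pi^l_1(p)$ by item~1 of Definition~\ref{sat}. Hence $(1,1)\Vdash_l\K_a p$.

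Next I would establish the negative claim $(1,1)\nVdash_r\K_a p$. Again by item~4 of Definition~\ref{sat}, knowledge of~$p$ at $(1,1)$ in the right model would require $p$ to hold at every meaning of every $a$-indistinguishable state, and in particular at state~$2$ (since $1\sim_a 2$) under its unique meaning~$1$. But $1\notin\pi^r_2(p)$, so $(2,1)\nVdash_r p$ by item~1 of Definition~\ref{sat}. This single counterexample witnesses the failure of the universal condition in item~4, giving $(1,1)\nVdash_r\K_a p$, which completes the proof.

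I do not anticipate a genuine obstacle here: both statements follow by unwinding the definition of~$\K_a$ against the finite, fully specified models. The only point requiring care is bookkeeping about which state the indistinguishability relation reaches in each model. The whole argument parallels the second lemma of the previous section, with the roles of the two modalities interchanged, so the proof is essentially a short direct computation rather than an inductive or constructive argument.
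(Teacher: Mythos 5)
Your proof is correct and follows essentially the same route as the paper's: both arguments verify the two claims by directly unwinding items 1 and 4 of Definition~\ref{sat} against the explicit valuations $1\in\pi^l_1(p)$, $1\in\pi^r_1(p)$, $1\notin\pi^r_2(p)$, using state $2$ with $1\sim_a 2$ as the counterexample in the right model. Your version is slightly more explicit about why the universal condition in the left model reduces to the single pair $(1,1)$, but this is a matter of exposition, not a different argument.
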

\begin{proof}
Note that $1\in \pi^l_1(p)$, see Figure~\ref{K-undefinability figure}.  Thus, $(1,1)\Vdash_l p$ by item 1 of Definition~\ref{sat}. Therefore, $(1,1)\Vdash_l \K_a p$ by item 4 of Definition~\ref{sat}.

At the same time, $1\in \pi^r_1(p)$ and $1\notin \pi^r_2(p)$, see Figure~\ref{K-undefinability figure}. Thus, $(1,1)\Vdash_r p$ and $(2,1)\nVdash_r p$ by item 1 of Definition~\ref{sat}. Therefore, $(1,1)\nVdash_r \K_a p$ by item 4 of Definition~\ref{sat} and because $1\sim_a 2$, see Figure~\ref{K-undefinability figure}.
\end{proof}

The next theorem follows from the two previous lemmas.

\begin{theorem}
Knowledge modality $\K$ is not definable in language $\Phi^{\mhyphen \K}$. \qed
\end{theorem}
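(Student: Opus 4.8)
The plan is to derive the theorem directly from the two preceding lemmas by a standard indistinguishability argument, mirroring the structure already used for the comprehension undefinability theorem, without constructing any new models. The key observation is that a modality fails to be definable in a sublanguage as soon as we exhibit a single pair of pointed models that agree on every formula of that sublanguage yet are separated by some formula using the modality in question. The two lemmas supply exactly such a pair: the first lemma guarantees agreement of the pointed models $(1,1)$ in the left and the right models on all of $\Phi^{\mhyphen\K}$, while the second lemma separates them via $\K_a p$.

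First I would suppose, toward a contradiction, that $\K$ is definable in $\Phi^{\mhyphen\K}$. Under this assumption the particular formula $\K_a p$ is logically equivalent to some formula $\psi\in\Phi^{\mhyphen\K}$, meaning that $(w,m)\Vdash\K_a p$ iff $(w,m)\Vdash\psi$ in every epistemic model with meanings and at every state $w$ and meaning $m$. Applying this equivalence at the point $(1,1)$ of the left model and of the right model, I would rewrite the two separation facts of the second lemma as $(1,1)\Vdash_l\psi$ and $(1,1)\nVdash_r\psi$.

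The final step is to confront this with the first lemma. Since $\psi\in\Phi^{\mhyphen\K}$, the first lemma yields $(1,1)\Vdash_l\psi$ iff $(1,1)\Vdash_r\psi$, which directly contradicts $(1,1)\Vdash_l\psi$ together with $(1,1)\nVdash_r\psi$. Hence no such $\psi$ can exist, and $\K$ is therefore not definable in $\Phi^{\mhyphen\K}$.

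I do not expect any genuine obstacle at this stage: the substantive work, namely choosing the two models of Figure~\ref{K-undefinability figure} so that one validates $\K_a p$ at $(1,1)$ while the other falsifies it, all the while keeping the two points $\Phi^{\mhyphen\K}$-equivalent, has already been carried out in the two lemmas. The only point demanding a little care is the precise reading of \emph{definable}: the argument as sketched refutes definability of the single instance $\K_a p$, which already suffices, since definability of the modality $\K$ would in particular force $\K_a p$ to be expressible in $\Phi^{\mhyphen\K}$.
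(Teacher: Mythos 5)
Your proposal is correct and follows exactly the paper's own route: the paper states that the theorem ``follows from the two previous lemmas,'' and your argument is precisely the standard contradiction (a defining formula $\psi\in\Phi^{\mhyphen\K}$ for $\K_a p$ would have to agree at $(1,1)$ in both models by the first lemma, yet separate them by the second) that the paper leaves implicit. No gaps; your closing remark that refuting expressibility of the single instance $\K_a p$ suffices is also the right reading of definability here.
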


\section{Axioms}

In the rest of the paper we give a sound and complete logical system that captures the interplay between knowledge modality $\K$ and comprehension modality $\C$.
In addition to propositional tautologies in language $\Phi$, our logical system contains the following axioms:
\begin{enumerate}
    \item Truth: $\K_a\phi\to\phi$,
    \item Negative Introspection: $\neg\K_a\phi\to\K_a\neg\K_a\phi$,
    \item Distributivity: $\K_a(\phi\to\psi)\to(\K_a\phi\to\K_a\psi)$,
    \item Comprehension of Known: $\K_a\phi\to\C_a\phi$,
    \item Introspection of Comprehension: $\C_a\phi\to\K_a\C_a\phi$,
    \item Comprehension of Negation: $\C_a\phi\to\C_a\neg\phi$,
    \item Comprehension of Implication:\\ $\C_a\phi\to(\C_a\psi\to\C_a(\phi\to\psi))$,
    \item Substitution: $\K_a(\phi\leftrightarrow\psi)\to(\C_a\phi\to\C_a\psi)$,
    \item Comprehension of Comprehension: $\C_a\C_b\phi$,
    \item Incomprehensible: $\C_a(\C_b\phi\to \phi)$.
\end{enumerate}

The Truth, the Negative Introspection, and the Distributivity axioms are standard axioms of epistemic logic S5. The Comprehension of Known axiom states that an agent must comprehend any statement that she knows. The Introspection of Comprehension axiom states that if an agent comprehends a statement, then she must know that she comprehends it. The Comprehension of Negation and the Comprehension of Implication axioms capture the fact that all agents are assumed to understand the meaning of Boolean connectives. Thus, if an agent comprehends $\phi$ and $\psi$, then she must comprehend negation $\neg\phi$ and implication $\phi\to\psi$. The Substitution axiom states that if an agent knows that two sentences are equivalent and she comprehends one of them, then she must comprehend the other. 
The Comprehension of Comprehension axiom states that any agent must comprehend statement $\C_b\phi$, even if she does not comprehend $\phi$. 

The Incomprehensible axiom states that any agent $a$ must comprehend statement $\C_b\phi\to \phi$. We call this axiom  Incomprehensible because we do not have a clear intuition of why it is true. The formal proof of soundness for this axiom is given in Lemma~\ref{Incomprehensible soundness}. 

We write $\vdash\phi$ if formula $\phi$ is provable from the above axioms using the Modus Ponens and the Necessitation inference rules:
$$
\dfrac{\phi, \phi\to\psi}{\psi}
\hspace{15mm}
\dfrac{\phi}{\K_a\phi}.
$$
We write $X\vdash\phi$ if formula $\phi$ is provable from the theorems of our logical system and the set of additional axioms $X$ using only the Modus Ponens inference rule.


\section{Soundness}

The Truth, the Negative Introspection, and the Distributivity axioms are standard axioms of epistemic logic S5. Below we show soundness of each of the remaining axioms as a separate lemma. 

\begin{lemma}
If $(w,m)\Vdash\K_a\phi$, then $(w,m)\vdash\C_a\phi$.
\end{lemma}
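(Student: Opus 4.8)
The plan is to unfold the two semantic clauses and observe that knowledge is simply a stronger condition than comprehension. I would begin by assuming the antecedent $(w,m)\Vdash\K_a\phi$. By item 4 of Definition~\ref{sat}, this means $(u,m')\Vdash\phi$ for every state $u\in W$ with $w\sim_a u$ and every meaning $m'\in M_u$ — that is, $\phi$ holds under \emph{every} meaning in \emph{every} $a$-indistinguishable state.

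Next I would verify the consequent $(w,m)\Vdash\C_a\phi$ directly against item 5 of Definition~\ref{sat}. To that end, fix an arbitrary state $u\in W$ with $w\sim_a u$ and arbitrary meanings $m',m''\in M_u$, and suppose $(u,m')\Vdash\phi$; the goal is to conclude $(u,m'')\Vdash\phi$. But this conclusion follows at once from the knowledge assumption, applied to the state $u$ and the meaning $m''\in M_u$, since $w\sim_a u$. Hence the implication required by item 5 holds for every such $u,m',m''$, which gives $(w,m)\Vdash\C_a\phi$.

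The key conceptual point, which I would make explicit, is that the hypothesis $(u,m')\Vdash\phi$ inside item 5 is not actually needed here: knowledge guarantees that $\phi$ is true under \emph{all} meanings in each $a$-indistinguishable state, and truth under all meanings trivially entails consistency across meanings. Thus comprehension holds vacuously once knowledge is assumed. I expect no real obstacle in this argument; it is a one-line quantifier manipulation, and the only thing to take care with is matching the alternation of meaning-quantifiers in the two clauses so that the universally quantified meaning $m''$ of item 4 supplies exactly the witness demanded by item 5.
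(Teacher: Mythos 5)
Your proof is correct and follows essentially the same argument as the paper: fix an arbitrary $a$-indistinguishable state $u$ and meanings $m',m''\in M_u$, then derive $(u,m'')\Vdash\phi$ directly from item 4 of Definition~\ref{sat} applied to $u$ and $m''$, noting (as the paper's proof implicitly does) that the hypothesis $(u,m')\Vdash\phi$ is never used. No gaps; this matches the paper's one-line quantifier argument.
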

\begin{proof}
Consider any state $u\in W$ and any two meanings $m',m''\in M_u$ such that $w\sim_a u$ and $(u,m')\Vdash\phi$. By item 5 of Definition~\ref{sat}, it suffices to show that $(u,m'')\Vdash\phi$.

Note that assumption $(w,m)\Vdash\K_a\phi$ of the lemma implies that $(u,m'')\Vdash\phi$ by item 4 of Definition~\ref{sat} and the assumption $w\sim_a u$.
\end{proof}

\begin{lemma}
If $(w,m)\Vdash\C_a\phi$, then $(w,m)\vdash\K_a\C_a\phi$.
\end{lemma}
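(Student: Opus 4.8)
The plan is to use the semantics of knowledge directly. Assume $(w,m)\Vdash\C_a\phi$; I must establish $(w,m)\Vdash\K_a\C_a\phi$. By item 4 of Definition~\ref{sat}, this amounts to verifying $(u,m')\Vdash\C_a\phi$ for every state $u\in W$ with $w\sim_a u$ and every meaning $m'\in M_u$. So I fix an arbitrary such $u$ and $m'$, and reduce the entire lemma to establishing this single satisfaction statement.

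To prove $(u,m')\Vdash\C_a\phi$, I unfold item 5 of Definition~\ref{sat} at the pair $(u,m')$: I take an arbitrary state $t\in W$ and arbitrary meanings $m'',m'''\in M_t$ subject to $u\sim_a t$ and $(t,m'')\Vdash\phi$, and I must deduce $(t,m''')\Vdash\phi$. The strategy is to feed exactly this data into the assumption $(w,m)\Vdash\C_a\phi$, which by item 5 says precisely that $\phi$ is consistent across the meanings of every $a$-indistinguishable state relative to $w$.

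The crux, and essentially the only substantive step, is that $\sim_a$ is an equivalence relation by item 2 of Definition~\ref{model}. Transitivity converts $w\sim_a u$ and $u\sim_a t$ into $w\sim_a t$, so $t$ is an $a$-neighbor of $w$ as well. Then the assumption $(w,m)\Vdash\C_a\phi$, instantiated at the state $t$ together with $(t,m'')\Vdash\phi$, yields $(t,m''')\Vdash\phi$, which is exactly what is required. This completes the verification of $(u,m')\Vdash\C_a\phi$, and hence of $(w,m)\Vdash\K_a\C_a\phi$.

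I expect the main obstacle to be bookkeeping of the nested quantifiers rather than any genuine difficulty: the comprehension condition is state-local, quantifying over all $a$-indistinguishable states, so the reason comprehension at $w$ transfers to a neighbor $u$ is precisely that the $a$-equivalence class of $u$ coincides with that of $w$. Identifying transitivity as the mechanism that licenses applying the hypothesis at $t$ is the one place requiring care; every remaining step follows mechanically from Definition~\ref{sat}.
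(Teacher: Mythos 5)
Your proof is correct and follows essentially the same route as the paper's: unfold item 4 of Definition~\ref{sat}, then item 5 at the neighbor $(u,m')$, and use transitivity of the equivalence relation $\sim_a$ to bring the hypothesis $(w,m)\Vdash\C_a\phi$ to bear on the state $t$. The identification of transitivity as the key step matches the paper exactly, so nothing further is needed.
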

\begin{proof}
Consider any state $u$ and any meaning $m'\in M_w$ such that $w\sim_a u$. By item 4 of Definition~\ref{sat}, it suffices to prove that $(u,m')\Vdash\C_a\phi$. Towards this proof, consider any state $v\in W$ and any two meanings $m_1,m_2\in M_v$ such that $u\sim_a v$ and $(v,m_1)\Vdash\phi$. By item 5 of Definition~\ref{sat}, it suffices to show that $(v,m_2)\Vdash\phi$.

Assumptions $w\sim_a u$ and $u\sim_a v$ imply that $w\sim_a v$ because $\sim_a$ is an equivalence relation. Therefore, the assumption $(v,m_1)\Vdash\phi$ implies $(v,m_2)\Vdash\phi$ by item 5 of Definition~\ref{sat} and the assumption $(w,m)\Vdash\C_a\phi$ of the lemma.
\end{proof}

\begin{lemma}
If $(w,m)\Vdash \C_a\phi$, then $(w,m)\Vdash\C_a\neg\phi$.
\end{lemma}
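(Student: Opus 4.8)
The plan is to unfold the target $(w,m)\Vdash\C_a\neg\phi$ directly through item 5 of Definition~\ref{sat} and reduce it to the hypothesis $(w,m)\Vdash\C_a\phi$. First I would fix an arbitrary state $u\in W$ and arbitrary meanings $m',m''\in M_u$ such that $w\sim_a u$ and $(u,m')\Vdash\neg\phi$; by item 5 of Definition~\ref{sat}, it suffices to establish $(u,m'')\Vdash\neg\phi$. Translating both the assumption and the goal through item 2 of Definition~\ref{sat}, the assumption becomes $(u,m')\nVdash\phi$ and the goal becomes $(u,m'')\nVdash\phi$.

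I would then argue by contradiction. Suppose $(u,m'')\Vdash\phi$. The hypothesis $(w,m)\Vdash\C_a\phi$ says, via item 5 of Definition~\ref{sat}, that $\phi$ has a consistent truth value across all meanings of every $a$-indistinguishable state. Since $w\sim_a u$ and $(u,m'')\Vdash\phi$, applying item 5 with $m''$ in the role of the satisfying meaning and $m'$ in the role of the target meaning yields $(u,m')\Vdash\phi$, contradicting $(u,m')\nVdash\phi$. Hence $(u,m'')\nVdash\phi$, that is, $(u,m'')\Vdash\neg\phi$, which is what was required.

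The only point deserving care is the direction in which the comprehension hypothesis is applied: the consistency condition in item 5 of Definition~\ref{sat} is stated for an arbitrary ordered pair of meanings, so it is in effect symmetric, and I may freely take the meaning satisfying $\phi$ to be $m''$ and draw the conclusion about $m'$. Once this symmetry is noted, there is no genuine obstacle; the argument is a short, routine application of the definitions, parallel in structure to the preceding soundness lemmas.
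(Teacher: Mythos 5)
Your proof is correct and follows essentially the same route as the paper's: both unfold the goal via item 5 of Definition~2, assume toward a contradiction that $(u,m'')\Vdash\phi$, and apply the comprehension hypothesis with the roles of $m'$ and $m''$ interchanged (legitimate, since item 5 quantifies over all ordered pairs of meanings) to derive $(u,m')\Vdash\phi$ and contradict the assumption. The symmetry point you flag is exactly the step the paper uses implicitly; no gap.
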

\begin{proof}
Consider any state $u\in W$ and any two meanings $m',m''\in M_u$ such that $w\sim_a u$ and 
\begin{equation}\label{london}
    (u,m')\Vdash\neg\phi.
\end{equation}
Note that by item 5 of Definition~\ref{sat}, it suffices to show that $(u,m'')\Vdash\neg\phi$.

Suppose that $(u,m'')\nVdash\neg\phi$. Thus, $(u,m'')\Vdash\phi$ by item 2 of Definition~\ref{sat}. Hence, $(u,m')\Vdash\phi$ by item 5 of Definition~\ref{sat}, the assumption $(w,m)\Vdash \C_a\phi$ of the lemma, and the assumption $w\sim_a u$. Therefore, $(u,m')\nVdash\neg\phi$ by item 2 of Definition~\ref{sat}, which contradicts statement~(\ref{london}).
\end{proof}

\begin{lemma}
If $(w,m)\Vdash \C_a\phi$ and $(w,m)\Vdash\C_a\psi$, then $(w,m)\vdash\C_a(\phi\to\psi)$.
\end{lemma}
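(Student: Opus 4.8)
The plan is to unfold item~5 of Definition~\ref{sat} for the target formula $\C_a(\phi\to\psi)$ and reduce the claim to a consistency-across-meanings statement. Concretely, I would fix an arbitrary state $u\in W$ with $w\sim_a u$ and arbitrary meanings $m',m''\in M_u$, assume $(u,m')\Vdash\phi\to\psi$, and aim to derive $(u,m'')\Vdash\phi\to\psi$. By item~5, establishing this for every such $u$, $m'$, and $m''$ is exactly what is required to conclude $(w,m)\Vdash\C_a(\phi\to\psi)$.

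The next step is a case split on whether $(u,m'')\Vdash\phi$. If $(u,m'')\nVdash\phi$, then $(u,m'')\Vdash\phi\to\psi$ is immediate from item~3 of Definition~\ref{sat}, and there is nothing more to show in this case.

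The substantive case, and the one I expect to be the crux, is $(u,m'')\Vdash\phi$. Here the argument runs through a chain of meaning transfers. First, from $(u,m'')\Vdash\phi$, the hypothesis $(w,m)\Vdash\C_a\phi$, and $w\sim_a u$, item~5 yields $(u,m')\Vdash\phi$. Combining this with the standing assumption $(u,m')\Vdash\phi\to\psi$ via item~3 gives $(u,m')\Vdash\psi$. Finally, from $(u,m')\Vdash\psi$, the hypothesis $(w,m)\Vdash\C_a\psi$, and $w\sim_a u$, item~5 yields $(u,m'')\Vdash\psi$, whence $(u,m'')\Vdash\phi\to\psi$ by item~3. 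The delicate point is getting the directions of the two comprehension applications right: comprehension of $\phi$ is used to move the truth of $\phi$ from $m''$ back to $m'$, where the implication is assumed to hold and modus ponens can be applied, and comprehension of $\psi$ is then used to push the resulting truth of $\psi$ forward from $m'$ to $m''$. No genuine difficulty beyond this bookkeeping is anticipated, since every step follows directly from Definition~\ref{sat}.
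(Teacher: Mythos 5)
Your proposal is correct and follows essentially the same route as the paper's own proof: fix an $a$-indistinguishable state $u$ with $(u,m')\Vdash\phi\to\psi$, assume $(u,m'')\Vdash\phi$, transfer $\phi$ from $m''$ to $m'$ using $\C_a\phi$, apply modus ponens at $m'$, and transfer $\psi$ from $m'$ to $m''$ using $\C_a\psi$. The direction-of-transfer bookkeeping you highlight is unproblematic because item~5 of Definition~\ref{sat} quantifies universally over both meanings, so comprehension moves truth between any two meanings in either direction.
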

\begin{proof}
Consider any state $u\in W$ and any two meanings $m',m''\in M_u$ such that $w\sim_a u$ and 
\begin{equation}\label{paris}
    (u,m')\Vdash\phi\to\psi.
\end{equation}
Note that by item 5 of Definition~\ref{sat}, it suffices to prove that $(u,m'')\Vdash\phi\to\psi$. Towards this proof, suppose that $(u,m'')\Vdash\phi$. By item 3 of Definition~\ref{sat}, it suffices to show that $(u,m'')\Vdash\psi$.

Assumption $(u,m'')\Vdash\phi$ implies that $(u,m')\Vdash\phi$ by item 5 of Definition~\ref{sat}, the assumption $(w,m)\Vdash \C_a\phi$ of the lemma, and assumption $w\sim_a u$. Hence, $(u,m')\Vdash\psi$ by item 3 of Definition~\ref{sat} and statement~(\ref{paris}). Thus, $(u,m'')\Vdash\psi$, by item 5 of Definition~\ref{sat}, the assumption $(w,m)\Vdash \C_a\psi$ of the lemma, and assumption $w\sim_a u$.
\end{proof}

\begin{lemma}
If $(w,m)\Vdash \K_a(\phi\leftrightarrow\psi)$ and  $(w,m)\Vdash \C_a\phi$, then $(w,m)\Vdash \C_a\psi$.
\end{lemma}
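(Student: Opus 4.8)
The plan is to unfold the goal $(w,m)\Vdash\C_a\psi$ using item 5 of Definition~\ref{sat}, and then shuttle back and forth between $\psi$ and $\phi$ by means of the two hypotheses. Concretely, I would begin by fixing an arbitrary state $u\in W$ and arbitrary meanings $m',m''\in M_u$ with $w\sim_a u$ and $(u,m')\Vdash\psi$; by item 5 it suffices to establish $(u,m'')\Vdash\psi$.

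The first key move is to convert the $\psi$-assumption at $m'$ into a $\phi$-assumption at $m'$. Since $w\sim_a u$ and $m'\in M_u$, the hypothesis $(w,m)\Vdash\K_a(\phi\leftrightarrow\psi)$ gives, via item 4 of Definition~\ref{sat}, that $(u,m')\Vdash\phi\leftrightarrow\psi$; combined with $(u,m')\Vdash\psi$ this yields $(u,m')\Vdash\phi$. I would then invoke comprehension of $\phi$: from $(w,m)\Vdash\C_a\phi$, item 5 of Definition~\ref{sat} together with $w\sim_a u$, the meanings $m',m''\in M_u$, and $(u,m')\Vdash\phi$ delivers $(u,m'')\Vdash\phi$.

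The final step transfers back from $\phi$ to $\psi$, now at the meaning $m''$. Applying the knowledge hypothesis a second time—again through item 4 of Definition~\ref{sat}, with $w\sim_a u$ and $m''\in M_u$—gives $(u,m'')\Vdash\phi\leftrightarrow\psi$, whence $(u,m'')\Vdash\phi$ forces $(u,m'')\Vdash\psi$, which is exactly what was required.

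The subtle point I would watch for is that both applications of the equivalence must hold at the specific meanings $m'$ and $m''$, not merely at the distinguished meaning $m$ in state $w$. This is precisely where the ``strong'' reading of knowledge in item 4 (as opposed to the pointwise alternative 4$'$ discussed after Definition~\ref{sat}) is essential: $\K_a(\phi\leftrightarrow\psi)$ guarantees the biconditional under \emph{every} meaning of \emph{every} $a$-indistinguishable state, so the equivalence is available at $m'$ and at $m''$ alike. Apart from this observation the argument is a routine chaining of items 4 and 5, so I do not anticipate any real obstacle beyond bookkeeping of the quantifiers over meanings.
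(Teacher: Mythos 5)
Your proof is correct and follows essentially the same route as the paper's: fix $u$, $m'$, $m''$ with $w\sim_a u$ and $(u,m')\Vdash\psi$, use $\K_a(\phi\leftrightarrow\psi)$ to pass from $\psi$ to $\phi$ at $m'$, apply item 5 with $\C_a\phi$ to move $\phi$ from $m'$ to $m''$, and use the equivalence again at $m''$ to recover $\psi$. The only cosmetic difference is that the paper extracts the two separate implications $(u,m')\Vdash\psi\to\phi$ and $(u,m'')\Vdash\phi\to\psi$ rather than the full biconditional at each meaning, which changes nothing of substance.
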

\begin{proof}
Consider any state $u\in W$ and any two meanings $m',m''\in M_u$ such that $w\sim_a u$ and $(u,m')\Vdash\psi$. By item 5 of Definition~\ref{sat}, it suffices to show that $(u,m'')\Vdash\psi$.

By Definition~\ref{sat}, the assumption $(w,m)\Vdash \K_a(\phi\leftrightarrow\psi)$ of the lemma implies that 
\begin{eqnarray}
&&(u,m')\Vdash \psi\to\phi\label{psi to phi},\\
&&(u,m'')\Vdash \phi\to\psi\label{phi to psi}.
\end{eqnarray}
By item 3 of Definition~\ref{sat},  assumption $(u,m')\Vdash\psi$ and statement~(\ref{psi to phi}) imply that $(u,m')\Vdash\phi$. Hence, $(u,m'')\Vdash\phi$ by item 5 of Definition~\ref{sat}, the assumption $(w,m)\Vdash \C_a\phi$ of the lemma, and the assumption $w\sim_a u$. Thus, $(u,m'')\Vdash\psi$ by item 3 of Definition~\ref{sat} and statement~(\ref{phi to psi}).
\end{proof}

\begin{lemma}\label{Comprehension of Comprehension soundness}
$(w,m)\Vdash\C_a\C_b\phi$.
\end{lemma}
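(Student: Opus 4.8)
The plan is to unfold the outer comprehension modality via item 5 of Definition~\ref{sat} and then reduce the whole claim to the meaning-independence of the inner formula $\C_b\phi$, which is precisely what Lemma~\ref{C transfer lemma} supplies. The key conceptual point is that $\C_b\phi$ is a comprehension formula, and comprehension formulae are constant across the meanings of any fixed state; hence $\C_b\phi$ is \emph{trivially} consistent across meanings everywhere, so any agent $a$ comprehends it.

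Concretely, I would begin by invoking item 5 of Definition~\ref{sat} for the goal $(w,m)\Vdash\C_a\C_b\phi$. This reduces the goal to the following: for every state $u\in W$ and all meanings $m',m''\in M_u$ with $w\sim_a u$, if $(u,m')\Vdash\C_b\phi$ then $(u,m'')\Vdash\C_b\phi$. So I would fix an arbitrary such state $u$ and meanings $m',m''\in M_u$, assume $w\sim_a u$ and $(u,m')\Vdash\C_b\phi$, and aim to derive $(u,m'')\Vdash\C_b\phi$.

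The single substantive step is then to apply Lemma~\ref{C transfer lemma} at state $u$ with agent $b$ and formula $\phi$: the truth of $\C_b\phi$ at $u$ does not depend on the chosen meaning, so $(u,m')\Vdash\C_b\phi$ immediately yields $(u,m'')\Vdash\C_b\phi$, as required. This discharges the obligation from item 5 and establishes $(w,m)\Vdash\C_a\C_b\phi$.

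I do not expect any genuine obstacle here; the indistinguishability relation $\sim_a$ and the particular meanings $m,m',m''$ serve only as bookkeeping, and all the work is done by Lemma~\ref{C transfer lemma}. The only thing to be careful about is to apply the transfer lemma to the inner modality (agent $b$) at the \emph{inner} state $u$, rather than conflating it with the outer quantification over $a$-indistinguishable states.
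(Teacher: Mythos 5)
Your proof is correct, and its first step coincides with the paper's: unfold the outer $\C_a$ via item 5 of Definition~\ref{sat}, reducing the goal to transferring $(u,m')\Vdash\C_b\phi$ to $(u,m'')\Vdash\C_b\phi$ for an arbitrary $a$-indistinguishable state $u$. Where you diverge is in how that inner obligation is discharged. The paper performs a second unfolding: it fixes a further state $v$ with $u\sim_b v$ and meanings $m_1,m_2\in M_v$ with $(v,m_1)\Vdash\phi$, and obtains $(v,m_2)\Vdash\phi$ directly from the assumption $(u,m')\Vdash\C_b\phi$ by item 5 again. You instead close the goal in one step by citing Lemma~\ref{C transfer lemma}, i.e.\ the fact that the truth value of any $\C$-formula at a state does not depend on the chosen meaning. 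Since that lemma is itself an immediate consequence of item 5 never mentioning the evaluation meaning, the two arguments are the same in substance; yours is simply more modular, trading the nested unfolding for a lemma citation. It is worth noting that your style is exactly how the paper proves soundness of the Incomprehensible axiom (Lemma~\ref{Incomprehensible soundness}), which does invoke Lemma~\ref{C transfer lemma}, so the difference from the paper's proof of the present lemma is purely stylistic, and your version is arguably the cleaner of the two.
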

\begin{proof}
Consider any state $u\in W$ and any two meanings $m',m''\in M_u$ such that $w\sim_a u$ and $(u,m')\Vdash\C_b\phi$. Note that by item 5 of Definition~\ref{sat}, it suffices to prove that $(u,m'')\Vdash\C_b\phi$. Towards this proof, consider any state $v\in W$ and any two meanings $m_1,m_2\in M_v$ such that $u\sim_b v$ and $(v,m_1)\Vdash\phi$. By item 5 of Definition~\ref{sat}, it suffices to show that $(v,m_2)\Vdash\phi$.
Indeed, by the same item 5 of Definition~\ref{sat}, assumptions  $(u,m')\Vdash\C_b\phi$, $(v,m_1)\Vdash\phi$, and $u\sim_b v$ imply that $(v,m_2)\Vdash\phi$.
\end{proof}


\begin{lemma}\label{Incomprehensible soundness}
$(w,m)\Vdash\C_a(\C_b\phi\to\phi)$.
\end{lemma}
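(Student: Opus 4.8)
The plan is to unfold the outer comprehension modality via item~5 of Definition~\ref{sat}, which reduces the goal to showing that the formula $\C_b\phi\to\phi$ is consistent across the meanings in every $a$-indistinguishable state. Concretely, I would fix an arbitrary state $u\in W$ and arbitrary meanings $m',m''\in M_u$ with $w\sim_a u$, assume $(u,m')\Vdash\C_b\phi\to\phi$, and aim to derive $(u,m'')\Vdash\C_b\phi\to\phi$. Unfolding the implication with item~3 of Definition~\ref{sat}, I would further assume $(u,m'')\Vdash\C_b\phi$ and reduce the goal to establishing $(u,m'')\Vdash\phi$.

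The first key observation is that comprehension is meaning-independent within a state. By Lemma~\ref{C transfer lemma}, the assumption $(u,m'')\Vdash\C_b\phi$ yields $(u,m')\Vdash\C_b\phi$. Combining this with the standing assumption $(u,m')\Vdash\C_b\phi\to\phi$ and item~3 of Definition~\ref{sat} gives $(u,m')\Vdash\phi$. So $\phi$ holds at $u$ under meaning $m'$, and it remains only to transport it to meaning $m''$.

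The decisive step is to apply the comprehension assumption $(u,m'')\Vdash\C_b\phi$ to the state $u$ itself. Since $\sim_b$ is an equivalence relation, it is reflexive, so $u\sim_b u$. Instantiating item~5 of Definition~\ref{sat} with the witnesses $v=u$, $m_1=m'$, and $m_2=m''$, the facts $u\sim_b u$ and $(u,m')\Vdash\phi$ force $(u,m'')\Vdash\phi$, which is exactly what was needed.

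I expect the main obstacle to be recognizing which semantic levers make this ``incomprehensible'' axiom go through, since the statement has no clean intuitive reading. The two nonobvious moves are, first, invoking Lemma~\ref{C transfer lemma} to shift $\C_b\phi$ from meaning $m''$ to meaning $m'$ so that modus ponens applies at $m'$, and second, using the reflexivity of $\sim_b$ to let the comprehension of $\phi$ act \emph{within the single state} $u$ and thereby equalize the truth value of $\phi$ across $m'$ and $m''$. Once these two observations are in place, every remaining step is a routine appeal to items~3 and~5 of Definition~\ref{sat}.
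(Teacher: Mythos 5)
Your proof is correct and follows exactly the paper's own argument: unfold the outer $\C_a$ via item~5 of Definition~\ref{sat}, use Lemma~\ref{C transfer lemma} to transfer $\C_b\phi$ from $m''$ to $m'$, apply modus ponens at $m'$, and then use reflexivity $u\sim_b u$ with item~5 to carry $\phi$ from $m'$ to $m''$. There is nothing to add; the two proofs coincide step for step.
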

\begin{proof}
Consider any state $u\in W$ and any two meanings $m',m''\in M_u$ such that $w\sim_a u$ and 
\begin{equation}\label{moscow}
    (u,m')\Vdash\C_b\phi\to\phi
\end{equation}
Note that by item 5 of Definition~\ref{sat}, it suffices to prove that $(u,m'')\Vdash\C_b\phi\to\phi$. Towards this proof, suppose that $(u,m'')\Vdash\C_b\phi$. By item 3 of Definition~\ref{sat}, it suffices to show that $(u,m'')\Vdash\phi$. Indeed, by Lemma~\ref{C transfer lemma}, assumption $(u,m'')\Vdash\C_b\phi$ implies that $(u,m')\Vdash\C_b\phi$. It follows by item 3 of Definition~\ref{sat} and statement~(\ref{moscow}) that $(u,m')\Vdash\phi$. Thus, $(u,m'')\Vdash\phi$ by assumption $(u,m'')\Vdash\C_b\phi$, item 5 of Definition~\ref{sat} and because $u\sim_b u$. 
\end{proof}

\section{Completeness Proof Overview}

In this section we sketch a proof of the completeness of our logical system. The complete proof can be found in 
the full version of this paper.

A completeness theorem for a modal logical system is usually proven by constructing a canonical model in which {\em states} are defined to be maximal consistent sets of formulae. This is different in our case, because we define {\em meanings}, rather than states, to be maximal consistent sets of formulae. The set of all such meaning will be denoted by $M$. 

Definition~\ref{model} specifies that any model should have a state-specific set of meanings $M_w$ for each state $w$.  Sets of meanings $M_w$ and $M_u$ corresponding to distinct states $w$ and $u$ can but do not have to be disjoint. In our canonical model they are disjoint. In other words, we partition the set of all meanings (maximal consistent sets of formulae) $M$ into sets of meanings $\{M_w\}_{w\in W}$ corresponding to different states. We define this partition through an equivalence relation $\equiv$ on set $M$. Then, we define {\em states} as equivalence classes of this relation, see Figure~\ref{canonical model figure}.

\begin{figure}[ht]
\begin{center}
\scalebox{0.35}{\includegraphics{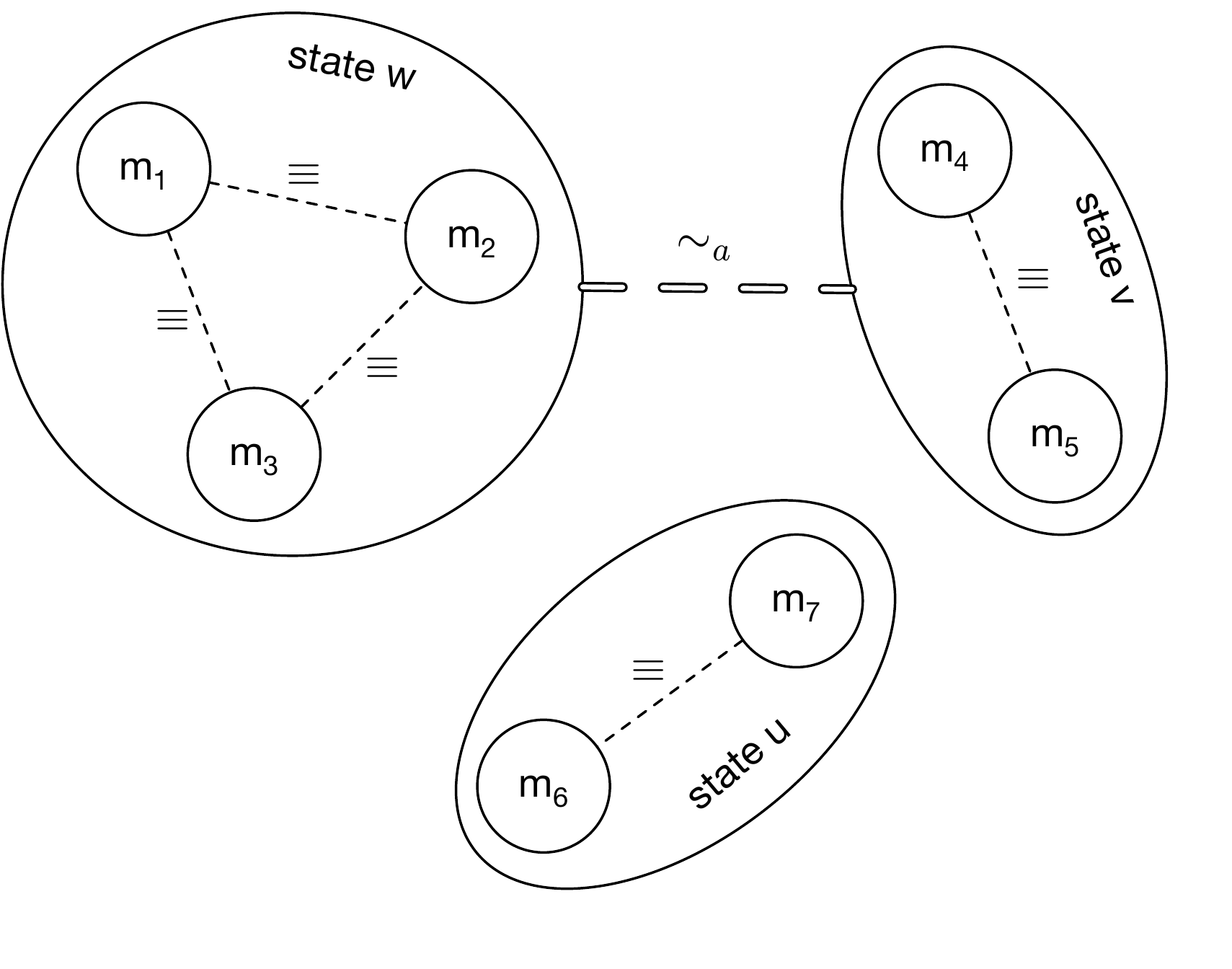}}
\caption{Canonical Model.}\label{canonical model figure}
\end{center}
\end{figure}

The exact definition of relation $\equiv$ is based on the intuition that if $\C_a\phi$ is true under a meaning in a state, then $\phi$ must be consistent across all meanings in the given state. To capture this, we say that $m\equiv m'$ when for each formula $\C_a\phi\in m$, if $\phi\in m$, then $\phi\in m'$, see Definition~7 in the full version.

To define indistinguishability relation $\sim_a$ between states, we first define it as a relation between meanings and then show that this relation is well-defined on states (equivalence classes of meanings with respect to relation $\equiv$). Our definition of indistinguishability of meanings by an agent $a$ is equivalent to the standard approach in epistemic logic: $m\sim_a m'$ if meanings $m$ and $m'$ contain the same $\K$-formulae.

A typical proof of completeness in modal logic includes a step where for each state $w$ that does not contain a modal formula $\Box\phi$ the proof constructs a ``reachable'' state $u$ such that $\neg\phi\in u$. In our proof, such a step for modality $\K$ is very standard and it is described in Lemma~35 of the full version.
The case of modality $\C$, however, is significantly different. Indeed, because item 5 of Definition~\ref{sat} refers to two different meanings, $m'$ and $m''$, the corresponding step for modality $\C$ involves a construction of two maximal consistent sets corresponding to these meanings. Since  $m'$ and $m''$ in item 5 of Definition~\ref{sat} are two meanings in the same state, we must guarantee that $m'\equiv m''$. This means that sets $m'$ and $m''$ must agree on all formulae $\phi$ such that $\C_a\phi$ belongs to at least one of them. 

To construct sets $m'$ and $m''$ for any given formula $\C_a\phi$, we introduce a new technique that we call {\em perfect confirming} sets. First, we define the notion of a confirming set and consider a set $Y$ of formulae that ``must'' belong to both: set $m'$ and $m''$. We show that set $Y$ is confirming. Then, we define {\em perfect} confirming set and show that any confirming set can be extended to a perfect confirming set. We extend set $Y$ to a perfect confirming set $Y'$ and show that sets $Y'\cup\{\phi\}$ and $Y'\cup\{\neg\phi\}$ are consistent. Finally, we use Lindenbaum's lemma to extend sets $Y'\cup\{\phi\}$ and $Y'\cup\{\neg\phi\}$ to maximal consistent sets of formulae $m'$ and $m''$, respectively. The actual proof in the full version of this paper does not define confirming sets directly. Instead, to improve readability, it first defines comprehensible sets and then confirming sets as a class of comprehensible sets.

\section{Conclusion}

The contribution of this paper is three-fold. 
First, we introduced a novel modality ``comprehensible'' and gave its formal semantics in epistemic models with meanings. Second, we have shown that this modality cannot be defined through knowledge modality and vice versa.
Finally, we proposed a sound and complete logical system that describes the interplay between the knowledge and the comprehension modalities. 
In the full version of this paper, we outline a possible extension of our work to a probabilistic setting.

In modal logic, the filtration technique is often used to prove weak completeness of a logical system with respect to a class of finite models~\cite{g72jpl}. Such completeness normally implies decidability of the system. For this approach to work in our case, the class of finite models would require not only the number of states to be finite, but the number of meanings to be finite as well. We have not been successful in adopting the filtration technique to achieve this. Thus, proving decidability of the proposed logical system remains an open question.


%

\bibliography{sp}

\end{document}